\newtheorem{problem}{Problem}
\newtheorem{theorem}{Theorem}
\newcommand{\squishlist}{\begin{list}{$\bullet$}
  { \setlength{\itemsep}{0pt}
     \setlength{\parsep}{3pt}
     \setlength{\topsep}{3pt}
     \setlength{\partopsep}{0pt}
     \setlength{\leftmargin}{1.5em}
     \setlength{\labelwidth}{1em}
     \setlength{\labelsep}{0.5em} } }
\newcommand{\squishend}{
  \end{list}  }
\newcommand{\calX}{{\ensuremath{\mathcal{X}}}}
\newcommand{\calY}{{\ensuremath{\mathcal{Y}}}}
\newcommand{\calU}{{\ensuremath{\mathcal{U}}}}
\newcommand{\calS}{{\ensuremath{\mathcal{S}}}}
\newcommand{\calP}{{\ensuremath{\mathcal{P}}}}
\newcommand{\calA}{{\ensuremath{\mathcal{A}}}}
\newcommand{\ttu}{{\mathrm{\tt{u}}}}
\newcommand{\bB}{{\ensuremath{\mathbf{B}}}}
\newcommand{\bb}{{\ensuremath{\mathbf{b}}}}
\newcommand{\bm}{{\ensuremath{\mathbf{m}}}}
\newcommand{\bS}{{\ensuremath{\mathbf{S}}}}
\newcommand{\bT}{{\ensuremath{\mathbf{T}}}}
\newcommand{\benefit}{{\ensuremath{\mathbf{b}}}}
\newcommand{\singschedule}{{\ensuremath{\mathbf{R}}}}
\newcommand{\req}{{\tt req}}
\DeclareMathOperator*{\argmax}{argmax}
\newcommand{\schedule}{{\tt Schedule}}
\newcommand{\kmeans}{{\tt K\_means }}
\newcommand{\randomPartitioning}{{\tt Random Partitioning }}
\newcommand{\kbenefit}{{\tt CohPart}}
\newcommand{\kbenefitSample}{{\tt CohPart\_S}}
\newcommand{\groupschedule}{{\sc group schedule}}
\newcommand{\gs}{{\sc gs}}
\newcommand{\Lgroupschedule}{{\sc L\_group schedule}}
\newcommand{\lgs}{{\sc lgs}}
\newcommand{\maxbenefit}{{\sc Cohort Selection}}
\newcommand{\cs}{{\sc catalog segmentation}}
\newcommand{\spara}[1]{\smallskip\noindent{\bf{#1}}}
\newcommand{\mpara}[1]{\medskip\noindent{\bf{#1}}}
\newtheorem{proposition}{Proposition}
\newcommand{\groundTruth}{{\tt ground truth}}
\newcommand{\random}{{\tt uniform}}
\newcommand{\normal}{{\tt normal}}
\newcommand{\pareto}{{\tt pareto}}
\newcommand{\BUCS}{{\tt BUCS}}
\newcommand{\BUCSSynth}{{\tt BUCSSynth}}
\begin{document}

\title{Team Formation for Scheduling Educational Material in Massive Online Classes
}

\numberofauthors{1} 
\author{\alignauthor Sanaz Bahargam, D\'ora Erd\H os, Azer Bestavros, Evimaria Terzi \\
\affaddr{Computer Science Department, Boston University, Boston MA} \\ 
\email{[bahargam, edori, best, evimaria]@cs.bu.edu}
} 
\maketitle

\begin{abstract}
Whether teaching in a classroom or a Massive Online Open Course it is crucial to present the material in a way that benefits the audience as a \emph{whole}.  We identify two important tasks to solve towards this objective;  (1.) group students so that they can maximally benefit from peer interaction  and (2.) find an optimal schedule of the educational material for each group.
 Thus, in this paper, we solve the problem of team formation 
and content scheduling for   education.
Given a time frame $d$, a set of students $\bS$ with their required need to learn different activities $\bT$ and given $k$ as the number of desired groups, we study the problem of finding $k$ group  of students. The goal is to teach students within time frame $d$ such that their potential for learning is maximized and find the best schedule for each group.  
 We show this problem to be
 NP-hard  and develop a polynomial algorithm for it. We show our algorithm to be effective both on synthetic as well as a real data set. For our experiments, we use real data on students' grades in a Computer Science department. As part of our contribution, we release a semi-synthetic dataset that mimics the properties of the real data.

\end{abstract}

\keywords{Team Formation; Clustering; Partitioning; Teams; MOOC; Large Classes} 

\section{Introduction}\label{sec:intro}
Education has always been regarded as one of the most important tasks of
society. Nowadays it is viewed as one of the best means to bridge the societal
inequalities gap and to help individuals to achieve their full potential.  Accordingly, many work has been dedicated to study how individuals learn and what is
the best way to teach them (see ~\cite{hpl2000,citeulike6127382} for an overview).
We recognize two substantial conclusions that studies in this area make on how to improve students' learning outcome. First,  the use of personalized education;  by shaping the content and delivery of the lessons to the individual ability and need of each student we can enhance their performance(\cite{Novikoff12, Lin2013199,JSID:JSID444,brusilovsky2003adaptive,EDM20141352}. 
Second, grouping students; working in teams with their peers helps students to access the material from a different viewpoint as well ~\cite{Agrawal:2014,ashman2003cooperative,Slavin87,Lin2013199,EDMsegio}.

In this paper we study the problem of creating personalized educational material for teams of students by taking a computational perspective.
More specifically, we focus on two problems: the first problem is how to
identify the right schedule for a group of students, when the group is apriori formed.  The second problem is how to partition a set of students into groups and design personalized schedules per group so that the benefit of students in terms of how much they learn and absorb is maximized.

Significant amount of work has been carried out  on designing personalized educational content, such as ~\cite{lu2004personalized} in the  context of online education services and more notably on designing personalized schedules  by Novikoff et al.~\cite{Novikoff12}  which has inspired our current work. Team formation in education is another well-studied area  ~\cite{Agrawal:2014,Esposito01061973,nla.cat-vn5488506} and it has been showed that students can improve their abilities by interaction and communication with other team members ~\cite{Lazarowitz}. 

However, to the best of our knowledge we are the first to formally define and study the two problems of team formation and personalized scheduling for teams in the context of education. 
Therefore, our contribution is to present formal definition of aforementioned problems, study their computational complexity and design algorithms for solving them. In addition to this we also apply our algorithms to a real dataset obtained from real students. We make our semi-synthetic dataset \BUCSSynth,  generated to faithfully mimic the real student data available on our website.

\spara{Roadmap:} The rest of the paper is organized as follows: After reviewing the related work in Section ~\ref{sec:related}, 
we define our framework and settings in Section ~\ref{sec:problem}.  In Section ~\ref{sec:group} we define group schedule problem. 
In Section ~\ref{sec:partition} we formally define \maxbenefit\   and also show its computational complexity. In the same section, we present our  \kbenefit\ to solve \maxbenefit\ . In Section  ~\ref{sec:experiments} we show usefulness of our \kbenefit\ on synthetic and real world datasets. Finally we conclude the paper in Section ~\ref{sec:conclusion}.

\section{Related Work}\label{sec:related}
Our problem is related to psychology, education and computer science including ability grouping, repetition in learning and team formation.  We review some of these works here:

\spara{Ability grouping:} Majority of the studies in this area find that over the whole population,  there definitely is a gain in academic
performance due to ability grouping ~\cite{Grossen96,Slavin87,Kulik82,Kulik92,Kerckhoff86,s1984sigma}.  Most of the studies agree, that there is high
increase to learning of students in high-ability groups. Some say there is only small
gain, while others say there is zero gain for low-ability groups. 
But even in this case, gain to the medium and high ability groups counters
these negative effects. The benefits of grouping on students' attitude has also been studied in ~\cite{Kulik82}. Authors have shown that students in grouped classes developed more positive attitudes toward the subjects they were studying than did students in ungrouped classes.



\spara{Repetition in learning:} 
Repetition has long been regarded as essential in learning. When learning a new activity for the first time, new information is gained and stored in the short-term memory. This information will be lost over time when there is no attempt to retain it ~\cite{pentland1989learning,roediger2007foundations,jaber2004variant,bruner01repetition,
agrawal2014forming,golshan2014profit,galbrun2017finding} 
 Repetition in learning and spacing effect has been even studied in computer science in ~\cite{Novikoff12}. In this work authors try to optimize a single student's learning in the light of Ebbinhaus's work. They model education process as a sequence of abstract units and these units are repeated over time.  However they did not consider the importance of having a deadline for e.g. to prepare for a test and also the fact that after enough repetitions the information will move to long-term memory and there is negligible gain from repetition. 

\spara{Team formation:} 
An earlier version of this study has appeared in \cite{bahargam2015personalized}. Team formation has been studied in operations research community ~\cite{Baykasoglu:2007,Chen1288436,Wi20099121,Zzkarian1999}, which defines the problem as finding optimal match between  people and demanded functional requirements. It is often solved using techniques such as simulated annealing, branch-and-cut or genetic algorithms ~\cite{Baykasoglu:2007,Wi20099121,Zzkarian1999}. 
It has also been studied in computer science  ~\cite{Agrawal:2014,Anagnostopoulos:2010,Kargar:2011,Lappas:2009, Majumder:2012,Rangapuram:2013,Anagnostopoulos2012}
Majority of these work focus on team formation to complete a task and minimize the communication cost among team members. The focus of these studies is on finding only one team to  perform a given task. ~\cite{Agrawal:2014} considers partitioning students in which each student has only one ability level for all the activities and each team has a set of leaders and followers.
The goal is to maximize the gain of students where gain is defined as  the number of students who can do better by interacting
with the higher ability students. Our problem differs as we consider different ability levels for different activities.

\section{Preliminaries}\label{sec:problem}
Already Aristotle said that "it is frequent repetition that produces a natural
tendency." 
The fundamental basis of our work is the realization 
 that repetition is an essential part of learning;
 engaging with a topic multiple times \footnote{For e.g. learning about a topic multiple
 times, reiterating it, possibly in
different formats or from different viewpoints} deepens and hastens students'
engagement and understanding processes ~\cite{bruner01repetition,weibel11principles}.  
In this paper we focus on developing  optimal schedules for teaching
groups of students (e.g. classes) that observe this dependency of learning
quality on reiteration of topics.
We model a student's learning process by a sequence of topics
that she learns about. In this sequence topics may appear multiple times,
and repetitions of a topic may count with different
weights towards the overall benefit of the student. 

 Let $\bS = \{s_1,s_2,\ldots,s_n\}$ be a set of students and $\bT = \{t_1,t_2,\ldots,t_m\}$ be a set of topics. We assign topics to $d$ timeslots based on two very simple rules; only one topic can be assigned to each
timeslot but the same topic can appear in multiple slots.  A \emph{schedule} \calA\ is a collision free assignment of topics to the timeslots. \calA\ can be thought of as an ordered list of (possible multiple occurrences) of the topics.  For a topic $t \in \bT$ the tuple $\langle t,i \rangle$ denotes the $i^{th}$ occurrence of $t$ in a schedule. The notation $\calA[r] = \langle t,i\rangle$ refers to the tuple $\langle t, i \rangle$ that is assigned to timeslot $r$ in $\calA$.

\mpara{Topic requirements.} For every student $s \in \bS$ and topic $t \in \bT$ there is a number of times that $s$ has to hear about $t$ in order for $s$ to learn every aspect of this topic. We call this number the \emph{requirement} of $s$ on $t$ and denote it by the integer function $\req(s,t)$. 

\mpara{Benefits from topic.} In order for a student $s$ to be fully knowledgeable about topic $t$, he has a requirement to learn $\req(s,t)$ times about $t$. We assume that until $s$ has met his requirements,  he  gains  knowledge and hence, will benefit to some extent from every repetition of $t$. After $\req(s,t)$ repetitions of $t$, while there is no detriment, there is also no additional benefit to $s$ from hearing about $t$.
We call $\bb(s,\langle t,i\rangle)$  (Equation~\eqref{eq:benefitdef}) the \emph{benefit} of $s$ from topic $t$ when hearing about it for the $i^{th}$ time. We assume that $s$ benefits equally from each of the first $\req(s,t)$ occurrences of $t$ in \calA, thus $\bb(s,\langle t,i\rangle) = \frac{1}{\req(s,t)}$ if $i \leq \req(s,t)$. Since after this point $s$ has already mastered topic $t$, there is no additional benefit from any later repetition of $t$ and hence  $\bb(s,\langle t,i\rangle) = 0$.
 \begin{equation}\label{eq:benefitdef}
 \bb(s,\langle t,i\rangle) =
  \begin{cases}
   \frac{1}{\req(s.t)} & \text{if\ } i\leq \req(s,t) \\
   0       & \text{otherwise}
  \end{cases}
\end{equation}
Note that for ease of exposition, we assume that all repetitions of $t$ before $\req(s,t)$ carry equal benefit to $s$. However, the definition and all of our later algorithms could easily be extended to use some other function $\bb'(s,\langle t,i\rangle)$. A natural choice for example is a function, where earlier repetitions of $t$ carry higher benefit than later ones, thus $\bb'(s,\langle t,i\rangle) = \frac{1}{2^i}$. The intuition is that first you learn about the fundamentals of $t$ and later you add on additional information.

 Given the benefits $\bb(s,\langle t,i\rangle)$  there is a natural extension to define the benefit $\bB(s,\calA)$ that $s$ gains from schedule \calA. This benefit is simply a summation over all timeslots in \calA,
 \begin{equation}\label{eq:benefit}
 \bB(s,\calA) = \sum_{r =1}^d\bb(s,\calA[r])
 \end{equation}
 Remember that in Equation~\eqref{eq:benefit}, $\calA[r]$ refers to the tuple $\langle t ,i \rangle$ that is scheduled at timeslot $r$ in \calA.

Observe, that every time topic $t$ appears in the schedule \calA,
it will contribute with the same amount of benefit towards
$\bB(s,\calA)$, regardless of the exact timeslot allocation
within  \calA. 


\section{The Group Schedule Problem}\label{sec:group}
 In this section we investigate the problem of how to divide students in such groups and assign schedules to each group to maximize the benefit of students in every group.  

\mpara{Group benefits.} Let $P \subseteq \bS$ be a subset of the students, we refer to $P$ as a \emph{group}. The notion of the benefit of a schedule \calA\ to  a single student $s$ lends itself to a straightforward extension to the \emph{benefit of a group}.  We define the benefit $\bB(P,\calA)$ group $P$ has from \calA\ in Equation~\eqref{eq:groupbenefit} as the sum of the benefits over all students in $P$.
\begin{equation}\label{eq:groupbenefit}
\bB(P,\calA) = \sum_{s\in P}\sum_{r=1}^d \bb(s,\calA[r])
\end{equation}

\mpara{The \textbf{\groupschedule\ }problem.}
Given a group $P$, our first task is to find an optimal schedule for this group, that is  to find a schedule that maximizes the group benefit of $P$. We call this the \groupschedule\ problem (problem ~\ref{prob:groupschedule}).
\begin{problem}[\groupschedule\ ]\label{prob:groupschedule}
Let $P \subseteq \bS$ be a group of students and $\bT$ be a set of topics. For every $s \in \bS$ and $t \in \bT$ let $\req(s,t)$ be the requirement of $s$ on $t$  given for every student-topic pair. Find a schedule $\calA_P$, such that $\bB(P,\calA_P)$ is maximized for a deadline $d$.
\end{problem}

 \mpara{The \schedule\ algorithm.}
 There is a simple polynomial time algorithm that solves problem ~\ref{prob:groupschedule}. We cal this algorithm  $\schedule(P,d)$. We  present 
 $\schedule(P,d)$
in Algorithm~\ref{algo:schedule}.

Remember that for any topic $t$ the requirement $\req(s,t)$ may be  different for the different students in $P$. 
 We say that the \emph{marginal benefit}, $\bm(P, \langle t , i \rangle)$, from the $i^{th}$ repetition of $t$  (thus $\langle t, i\rangle$) to $P$ is the increase in the group benefit if $\langle t, i \rangle$ is added to \calA.  The marginal benefit of $\langle t , i \rangle$ can be computed as the sum of benefits over all students in $P$ as given in Equation~\eqref{eq:marginal}.
 \begin{equation}\label{eq:marginal}
 \bm(P, \langle t , i \rangle) = \sum_{s \in P}\bb(s,\langle t,i\rangle)
 \end{equation}
 Observe that because students have different requirements for $t$, the  subsequent repetitions of the same topic may have different (decreasing) marginal benefits. 

Algorithm~\ref{algo:schedule} is a greedy algorithm that at every timeslot chooses an instance of the  topic with the largest marginal benefit. To achieve this we maintain an array $B$
in which values  are marginal benefit of topics $t \in \bT$, if next repetition of $t$ is added to the schedule $\calA_P$.  We keep the number that topic $t$ has been added to $\calA_P$ in array $R$.

The \schedule\ algorithm is an iterative algorithm that repeats  until all $d$ timeslots in the schedule are filled; 
it selects the topic $u_t$ with the largest marginal benefit from $B$ and adds it to the schedule $\calA_P$  (Lines~\ref{ln:topic_max_mb} and~\ref{ln:add_schedule}) .  Then it updates marginal benefit of $u_t$, $B[u_t]$ (Lines~\ref{ln:update_rep}-~\ref{ln:update_marginal_benefit}).

\begin{algorithm}[ht!]
\begin{algorithmic}[1]
\Statex {\bf Input:} requirements $\req(s,t)$ for every $s \in P$ and every topic $t \in \bT$, deadline $d$.
\Statex {\bf Output:}  schedule $\calA_P$.
\State $\calA_P \leftarrow [ ]$
\State $B \leftarrow [ \bm(P, \langle t , 1 \rangle)]$ for  $t \in \bT$ \label{ln:initialbenefit}
\State $R \leftarrow [0]$ for all $t \in \bT$
\While{$|\calA_P| < d$} \label{ln:while_loop}
\State Find topic $u_t$ with maximum marginal benefit in $B$  \label{ln:topic_max_mb} 
\State $\calA_P \leftarrow \langle u_t , R[u_t] \rangle$ \label{ln:add_schedule}
\State $R[u_t]++$\label{ln:update_rep}
\State Update $B[ u_t]$ to $\bm(P, \langle t , R[ u_t] \rangle)  $  \label{ln:update_marginal_benefit}
\EndWhile
\end{algorithmic}
\caption{\label{algo:schedule} \schedule\ algorithm for computing an optimal schedule $\calA_P$ for a group $P$. }
\end{algorithm} 
 
  
  \mpara{Runtime of \schedule.} 
The runtime of Algorithm~\ref{algo:schedule} is best computed from the point of view of computing marginal benefits of topics in $B$. 
In each iteration of the loop, the marginal benefit is only recomputed for one of the topics, $u_t$ with the largest benefit which has been added to the schedule $\calA_P$ most recently. 
The total runtime of algorithm is 
$O(d(|P| + |\bT|) )$. If we keep the marginal benefits in a max-heap, we can reduce the running time to $O(d(|P| + log|\bT|) )$. 
 Algorithm~\ref{algo:schedule} yields an optimal schedule for a group $P$. 
\begin{proposition}\label{prob:scheduleisopt}
The schedule $\calA_P$ output by Algorithm~\ref{algo:schedule} yields maximal benefit $\bB(P,\calA)$ for the group $P$.
\end{proposition}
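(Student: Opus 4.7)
The plan is to argue optimality of the greedy schedule by exploiting two structural facts about the objective, and then running a standard exchange argument.

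First I would observe that $\bB(P,\calA)$ depends only on the \emph{multiset} $M(\calA) = \{\calA[1],\ldots,\calA[d]\}$ of tuples $\langle t,i\rangle$ appearing in the schedule, not on the order in which they are placed in timeslots. This follows directly from Equation~\eqref{eq:benefit} and Equation~\eqref{eq:benefitdef}, since $\bb(s,\langle t,i\rangle)$ does not depend on the timeslot $r$. So \groupschedule{} reduces to choosing a length-$d$ multiset of tuples $\langle t,i\rangle$ maximizing the sum of marginal benefits, subject to the feasibility constraint that if $\langle t,i\rangle$ is included then so are $\langle t,1\rangle,\ldots,\langle t,i-1\rangle$ (any multiset output by the algorithm trivially satisfies this, and I will argue that we may restrict attention to feasible multisets without loss).

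Second, I would establish the key monotonicity lemma: for every topic $t$ the sequence $\bm(P,\langle t,1\rangle), \bm(P,\langle t,2\rangle),\ldots$ is non-increasing in $i$. This is immediate from Equation~\eqref{eq:marginal} together with Equation~\eqref{eq:benefitdef}, since
\begin{equation*}
\bm(P,\langle t,i\rangle) = \sum_{s\in P:\, \req(s,t)\geq i}\frac{1}{\req(s,t)},
\end{equation*}
and the set of qualifying students shrinks as $i$ grows. Monotonicity has two consequences I would record. (i) Any optimal multiset may be assumed feasible: if $\langle t,i\rangle$ is chosen but $\langle t,j\rangle$ with $j<i$ is not, swapping in $\langle t,j\rangle$ for $\langle t,i\rangle$ does not decrease the total benefit. (ii) Algorithm~\ref{algo:schedule} always picks a correctly indexed next repetition (using the counter $R[u_t]$), so its output is feasible, and the value it adds in iteration $r$ equals $\bm(P,\langle u_t, R[u_t]\rangle)$ where $u_t$ is the topic with the currently largest next-marginal-benefit.

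With these facts in place, I would carry out a standard exchange argument. Let $\calA^g$ be the schedule produced by \schedule{} and $\calA^*$ any optimal feasible schedule; identify each with its multiset. Suppose $M(\calA^g)\neq M(\calA^*)$ and consider the first iteration $r^*$ in which greedy chose a tuple $\langle t,i\rangle$ not present in $\calA^*$. Since $|M(\calA^*)|=d$ and greedy and $\calA^*$ agreed up to iteration $r^*-1$, there must exist some tuple $\langle t',i'\rangle\in M(\calA^*)\setminus M(\calA^g)|_{\leq r^*}$. By the greedy selection rule, $\bm(P,\langle t,i\rangle)\geq \bm(P,\langle t',i'\rangle)$, because $\langle t',i'\rangle$ (or its feasible predecessor, if $\langle t',i'-1\rangle$ has also not been taken by greedy yet, which by monotonicity is at least as valuable) was available for greedy to pick. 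Replacing $\langle t',i'\rangle$ in $\calA^*$ by $\langle t,i\rangle$ (restoring feasibility via monotonicity (i) if necessary) yields a schedule of value at least $\bB(P,\calA^*)$ that agrees with greedy on one more prefix element. Iterating this swap terminates with a schedule identical to $\calA^g$ and of value at least $\bB(P,\calA^*)$, proving $\bB(P,\calA^g)\geq \bB(P,\calA^*)$.

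The one delicate step is the exchange itself: I need to make sure that when I replace $\langle t',i'\rangle$ by $\langle t,i\rangle$ the resulting multiset is still feasible (i.e., the predecessors of $\langle t,i\rangle$ are present) and that no prefix already agreed upon with greedy is disturbed. Feasibility is handled by (i) above, and the prefix invariant is preserved because the swap touches only tuples in the symmetric difference $M(\calA^*)\triangle M(\calA^g)|_{\leq r^*}$. That bookkeeping is the main place where one has to be careful, but it is routine once the monotonicity lemma is in hand.
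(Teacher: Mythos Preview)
Your proposal is correct and rests on the same core observation as the paper: the benefit of a schedule is a sum of per-tuple contributions $\bm(P,\langle t,i\rangle)$ that do not interact across choices, so greedy selection of the largest available marginal benefit is optimal. The paper's own proof is a two-sentence version of exactly this---it notes that adding $\langle t,i\rangle$ contributes a fixed amount independent of other choices, and concludes directly.

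Where you differ is in rigor rather than route. The paper leaves implicit the monotonicity of $\bm(P,\langle t,\cdot\rangle)$ in $i$ and the feasibility (prefix-closedness) issue; you make both explicit and run a formal exchange argument. Your monotonicity lemma is precisely the missing step needed to justify that greedy's constrained choice (it can only pick $\langle t,R[t]+1\rangle$, not arbitrary $\langle t,i\rangle$) in fact coincides with the globally largest remaining tuple, and that the top-$d$ tuples form a feasible multiset. So your argument is more complete, while the paper's buys brevity at the cost of leaving that gap for the reader to fill.
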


\begin{proof} Observe, that the benefit of adding the $i^{th}$ repetition $\langle t ,i \rangle$ of topic $t$ to $\calA$ is only dependent on $i$ and $t$ but not on any other topic. Hence the choice that we make in algorithm~\ref{algo:schedule} in any iteration does not change the marginal benefit $\bm(P, \langle t , i \rangle)$. Thus choosing the topic $t$ with the largest marginal benefit in any iteration of algorithm \ref{algo:schedule} results in a schedule with maximal total benefit for the group.
\end{proof}

\section{The Cohort Selection Problem}\label{sec:partition}
So far we discussed how  to find an optimal schedule of topics for a given group of students. The next natural question is, that given a certain teaching capacity $K$ (i.e., there are $K$ teachers or $K$ classrooms available), how to divide students into $K$ groups so that each  student benefits the most possible from this arrangement.

 At a high level we  solve an instance of a  partition problem; we have to find a $K$-part partition $\calP = P_1\cup^\ast P_2 \cup^\ast \ldots \cup^\ast P_K$ of  students into groups, so that the sum of the group benefits over all groups is maximized. 
We call the problem of finding a partition that yields the highest sum of group benefits the  
 the \maxbenefit\  Problem . 
 \begin{problem}[\maxbenefit\ ]\label{prob:maxbenefit}
 Let $\bS$ be a set of students and $\bT$ be a set of topics. For every $s \in \bS$ and $t \in \bT$ let $\req(s,t)$ be the requirement of $s$ on $t$ that is  given for every student-topic pair. Find a partition \calP\ of students into $K$ groups, such that 
 \begin{equation}\label{eq:partitionP}\bB(\calP,d) = \sum_{P \in \calP} \bB(P,\calA_P)\end{equation}
 is maximized, where we assume that $\calA_P = \schedule(P,d)$ for every group.  
 \end{problem}

\begin{theorem}\label{thm:NP}
\maxbenefit\ (Problem ~\ref{prob:maxbenefit}) is NP-hard.
\end{theorem}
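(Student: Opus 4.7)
My plan is to establish NP-hardness by polynomial-time reduction from a known hard partitioning problem, with \textsc{Exact Cover by 3-Sets} (X3C) as the first target and graph $K$-coloring as a fallback. The driving intuition is that the benefit $\bB(P, \calA_P)$ produced by the optimal \schedule\ rewards groups whose students have aligned topic requirements: when $d$ is small relative to the diversity of $\req$ values within a group, the greedy schedule is forced to leave some students under-served, strictly reducing group benefit compared to a ``homogeneous'' group of the same size. This is the lever I would use to encode combinatorial partition constraints into the numerical objective.

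From an X3C instance with universe $U = \{u_1, \ldots, u_{3q}\}$ and a collection of triples $\mathcal{F} \subseteq \binom{U}{3}$, I would build a \maxbenefit\ instance with $K = q$ cohorts, one student $s_u$ per element $u \in U$, and one topic $t_F$ per triple $F \in \mathcal{F}$. The requirements take only two values: $\req(s_u, t_F) = 1$ if $u \in F$, and $\req(s_u, t_F) = M$ otherwise, for a large integer $M$ to be chosen at the end. The deadline $d$ is then tuned so that a group consisting of the three students of some triple $F \in \mathcal{F}$ can schedule $t_F$ once and simultaneously max out every member's benefit (each contributing $1$ to the group benefit), whereas any group not of this form must either waste slots on high-$M$ topics (each yielding only $\frac{1}{M}$ per student) or leave some student's low-requirement topics unscheduled.

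Using Proposition~1, I would then compute the optimal per-group benefit in closed form as a function of how the three-value patterns of the members interact, and define a threshold $\tau$ such that $\bB(\calP, d) \geq \tau$ if and only if every $P \in \calP$ is exactly the student-image of some $F \in \mathcal{F}$. The forward direction is a direct calculation: an exact cover gives $q$ aligned groups, each hitting its closed-form maximum. The converse is the crux and the main obstacle: I must argue that any partition containing a non-covering group suffers a strict, quantifiable benefit loss that cannot be compensated by the remaining cohorts. This is where the choice of $M$ matters. Taking $M$ large enough so that the ``aligned vs.\ misaligned'' gap dominates any residual slack across all $q$ cohorts isolates each per-group loss, and makes the threshold argument tight. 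The construction is polynomial in $|U| + |\mathcal{F}|$, and decoding an exact cover from a benefit-optimal partition is immediate, which together establish that \maxbenefit\ is NP-hard.
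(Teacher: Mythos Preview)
Your X3C reduction is sound: with $d=1$, $K=q$, and $M>3q$, the optimal benefit of any partition is exactly $\sum_{j}|P_j\cap F_j|+\frac{1}{M}\sum_j|P_j\setminus F_j|$ for the best per-group triple $F_j$, and the first sum hits $3q$ if and only if each $P_j$ equals some triple in $\mathcal{F}$, i.e., an exact cover exists. The converse you flag as the crux really is a one-line counting argument once $d$ is pinned to $1$; you should state that choice explicitly rather than leaving $d$ ``tuned,'' since for $d>1$ the per-group benefit picks up cross-triple terms that muddy the threshold.

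The paper takes a different route: it reduces from \textsc{Catalog Segmentation} (Kleinberg et al.), identifying customers with students, catalog items with topics, and the two catalogs with the two group schedules. The requirement gadget is the same two-value trick you use ($1$ versus a large integer), but the deadline is $d=r$ and, crucially, the number of groups is fixed at $K=2$. So while both arguments prove Theorem~\ref{thm:NP} as stated, the paper's reduction yields the stronger consequence that \maxbenefit\ is already NP-hard for two cohorts, whereas your construction needs $K$ to grow with the instance. Conversely, your reduction has the advantage of starting from a textbook problem and collapsing the scheduling side to a single slot, which makes the benefit calculus transparent and avoids the floor/rounding bookkeeping the paper needs to align $\bB$ with the catalog objective.
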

\begin{proof} 
We reduce the {\sc catalog segmentation} 
problem~\cite{Kleinberg04} to {\maxbenefit} problem.  \cs\ is the following problem; there
is a universe of items $\calU = \{\ttu_1, \ttu_2, \ldots, \ttu_m\}$
and subsets $\calS_1, \calS_2, \ldots, \calS_n \subseteq U$ given. Find two subsets $\calX$ and $\calY$ of $\calU$, both of size $|\calX| = |\calY| = r$, such that \begin{equation}\label{eq:catalog}
\text{CS}(\calX,\calY) = \sum_{i=1}^n\max \{|\calS_i \cap \calX|, |\calS_i \cap \calY|\}
\end{equation}
is maximized. It is proven by Kleinberg {\it et al.}~\cite{Kleinberg04} that \cs\ is NP-hard.

We now show that if we can solve {\maxbenefit} then we can also solve
 the {\cs} problem. More
specifically, we map an instance of {\cs} to an instance of  {\maxbenefit} as
 follows:
every subset $\calS_i$ in {\cs} is mapped to a student $s_i$
 in {\maxbenefit}
and element of the universe $\ttu_i\in \calU$ of {\cs} is mapped to a
 topic $t_i$ in {\maxbenefit}.
For student $s_i$ and topic $t_j$ we set the requirement
$\req(s_i,t_j) = 1$ if $\ttu_j\in \calS_i$, otherwise $\req(s_i,t_j)= nm^3$.  
We also set $d = r$ and $K= 2$. 

We can also map a solution of \maxbenefit\ to a solution of \cs\ and vice verse; let 
$\calP = \{X, Y\}$ be a partition of the students \bS\ in \maxbenefit\ and let $\calA_X$ and $\calA_Y$ be the optimal schedules for $X$ and $Y$. We define the sets \calX\ and \calY\ in \cs\ from $\calA_X$ and $\calA_Y$. Specifically, let $\{t_1^x, t_2^x, \dots, t_r^x\}$ be the topics (possible with multiplicity) that appear in $\calA_X$. Then we define $\calX = \{\ttu_{t_1^x}, \ttu_{t_2^x},\ldots,\ttu_{t_r^x} \}$ to contain the elements in $\calU$ corresponding to the topics in $\calA_X$. $\calY$ is derived in a similar way from $\calA_Y$. 

Given a solution $\calX$ and $\calY$ to \cs, we can define the partition $\calP = \{X,Y\}$ and the corresponding group schedules  $\calA_X$ and $\calA_Y$. For every $s \in \bS$ we assign $s$ to $X$ if $|\calS \cap \calX| > |\calS \cap \calY|$ and assign $s$ to $Y$ otherwise, where $\calS$ is the set in \cs\ that we identified with student $s$. Further,  the group schedule $\calA_X$ is the schedule that contains topic $t$ if and only if $\ttu_t \in \calX$. Similar, $\calY = \{t | \ttu_t \in \calY\}$. 

We show  if $\calP = \{X,Y\}$ is an optimal solution to \maxbenefit, then the corresponding solution $\calX$, $\calY$ has to be an optimal solution to \cs. 
First, observe that because of the choice of the requirements in \maxbenefit, if $\bB$ is the value of a solution to \maxbenefit, then the value of $\cs(\calX,\calY) \geq \lfloor \bB \rfloor$. Further, $\lfloor \bB(\{X,Y\},d) \rfloor = \cs(\calX,\calY)$, where $\calP = \{X,Y\}$ is derived from \calX\ and \calY.

Let us assume, that $\calP = \{X,Y\}$ is an optimal solution to \maxbenefit, but the derived \calX\ and \calY\ are not optimal for \cs. That means  there exist $\calX'$ and $\calY'$, such that $\cs(\calX,\calY) < \cs(\calX',\calY')$. However, in this case the partition  $\calP' = \{X',Y'\}$ with the schedules $\calA_{X'}$, $\calA_{Y'}$ derived from $\calX'$ and $\calY'$ would yield a higher value for \maxbenefit problem, contradicting the optimality of $\calP$.
 \end{proof}

\subsection{Partition algorithms.}\label{sec:partitionalgo} 
We first  describe  briefly two popular algorithms for clustering,  \kmeans\ and \randomPartitioning\ and how it is applied to our problem. Then we proceed to present our solution, \kbenefit\ to the \maxbenefit\ and a sampling-based speedup, \kbenefitSample\ .


\randomPartitioning\ is assigning each point randomly  to a cluster. We use this partitioning as a baseline to compare our algorithm with. Also we use it as the initialization part of our \kbenefit\ algorithm.

\kmeans\  is a clustering method used to minimize the average squared distance between points in the same cluster. Solving \kmeans\ problem ~\cite{hartigan1979algorithm} exactly is NP-hard. Lloyd's algorithm ~\cite{Lloyd82leastsquares} solves this problem by choosing $k$ centers randomly and assigning the points to the closest center. Then the centers are recomputed based on the points assigned to it. These two phases are repeated until there is no more improvement on the cost of clustering. In our setting the students are the data points and the repetition for each topic represent each dimension. 

\mpara{\kbenefit\ algorithm.} The \kbenefit\ algorithm (Cohort Partitioning)
is presented in algorithm ~\ref{algo:kbenefit} and consists of two phases; first there is an
initialization phase (Lines~\ref{ln:init_start}-~\ref{ln:init_end}), 
 in which a random clustering is executed on all of the students (Line ~\ref{ln:random_clustering}) and then for each partition $p_i$, the centers are computed (Lines~\ref{ln:centers_part_start}-~\ref{ln:centers_part_end}) 
 using algo ~\ref{algo:schedule}. When initial cluster centers are chosen, then there is an iterative phase (Lines ~\ref{ln:updatestart}-~\ref{ln:updateend}) where students get reassigned to clusters and  cluster centers are updated again. 



In our notations $\calA$ and $\singschedule$ both show the schedules (of a group of students or a single student). $\calA$ shows the vector of size $d$ consisting of topics and their repetitions $\langle t , R[t] \rangle$ for each time slot. $\singschedule$ is a vector of size $|\bT|$ and for each topic $t$, how many times it can be repeated in deadline $d$.

\begin{algorithm}[ht!]
\begin{algorithmic}[1]
\Statex {\bf Input:} requirements $\req(s,t)$ for a student $s \in P$ and every topic $t \in \bT$ and a single schedule $\singschedule$
\Statex {\bf Output:}  \benefit(s,\singschedule)  Benefit of $s$ from schedule $\singschedule$.
\State $b = 0$
\For{all topics $t \in \bT$}
\State   $b = b + \frac{min (req(s,t), \singschedule[t])}{ \singschedule[t]} $\label{ln:benefitsingle}
\EndFor 
\end{algorithmic}
\caption{\label{algo:benefit_single_point} Benefit algorithm for computing the benefit of a single student $s$ from a schedule $\singschedule$ }
\end{algorithm}

\begin{algorithm}[ht!]
\begin{algorithmic}[1]
\Statex {\bf Input:} requirement $\req(s,t)$ for every $s \in \bS$ and $t \in \bT$, number of timeslots $d$, number of groups $K$.
\Statex {\bf Output:}  partition \calP.
\State $C = {\ }$
\State $\calP = \{P_1, P_2, \ldots, P_K\}$
\State Run \randomPartitioning\ on the students and obtain $P_i$'s  \label{ln:random_clustering} \label{ln:init_start} 
\For{$i=1,\ldots,K$} \label{ln:centers_part_start}
\State $c_i = \schedule(P_i,d)$
\EndFor \label{ln:centers_part_end} \label{ln:init_end} 

\While{convergence is achieved}\label{ln:updatestart}
\For{all students $s \in \bS$}
\State $P_i \leftarrow s$, such that $i = \argmax_{j = 1,\ldots,k}\benefit(s,c_j)$\label{ln:initassign}
\EndFor\label{ln:initend}

\For{$i=1,\ldots,K$}
\State $c_i = \schedule(P_i,d)$
\EndFor

\EndWhile\label{ln:updateend}
\end{algorithmic}
\caption{\label{algo:kbenefit} \kbenefit\  for computing the
  partition \calP\ based on the benefit of students from schedules. }
\end{algorithm} 

 \mpara{Runtime : \label{algo:kbenefit}} 
\kbenefit\ is a heuristic to solve \maxbenefit problem. In each iteration of the algorithm, the group that each student can benefit the most is found and student is assign to that group. This will take $O(k |\bT|)$ for each student. Then the schedule of each group is updated and algorithm iterates until convergence is achieved. The total running time of each iteration is  $O(k|\bS||\bT|)$. In our experiments we observed that our algorithm converges really fast, less than  a few tens of iterations. 
 
 \mpara{\kbenefitSample\ algorithm.} The \kbenefitSample\ (Cohort Partitioning with Sampling,) resembles \kbenefit\, except that it performs clustering on a random sample of students  of size $n'$ and when clustering is finished assigns  the remaining students to the cluster with the maximum benefit  $\benefit(s,c_j)$. It reduces the running time to $O(k n' |\bT|)$  \label{runningtime:mainalgo}.  We set $n' = k*c$ for different values of $c$.
 
 \subsection{Constraints on Topic Order}
In real-life, most often we cannot pick any scheduling of topics we like. Instead, there are strict precedence constraints among the topics. For example, one has to learn addition before he can learn about multiplication during a math course. Therefore, we assume that along with the topics, a set of constraints is also given. The constraints can be simple ones, such as the first occurrence of topic $t_i$ has to be before topic $t_j$, or more complicated ones, topic $t_j$ can only be scheduled after at least $r_1$ repetitions of $t_{i_1}$ and $r_2$ repetitions of $t_{i_2}$. Of course, the set of constraints can also be empty, if we do not have any of them. We can easily modify algorithm  ~\ref{algo:schedule} to take into account these constraints and check for precedence constraints.  To achieve this, after line  ~\ref{ln:while_loop} we can check for precedence constraints and in line ~\ref{ln:topic_max_mb} we choose only the topics which their precedence constraints are met.

\section{Experiments}\label{sec:experiments}
The goal of these experiments is to gain an understanding of how our clustering algorithm works in terms of performance (objective function). Furthermore, we want to understand
how the deadline parameter impacts our algorithm. We used a real world dataset, semi synthetic and  synthetic datasets.  The semi synthetic dataset and the source code to generate it are available in our website.
 We first introduce Graded Response Model (GRM) briefly, then explain different datasets and finally show how well our algorithm is doing on each dataset. 


\mpara{Item Response Theory and Graded Response Model:} 
In psychometric, Item Response Theory (IRT) is a framework for designing and evaluating tests, questions and  questionnaires. In IRT models the probability of giving a  correct answer by a student to a question is determined based on the ability of student and the difficulty of the question.
For our work we used the Graded Response Model (GRM), an advanced IRT model which fits our data well and handles partial credit values. 
Using our data on grades of students for taken courses, GRM helps us to deduce ability scores for each student and difficulty scores for each course. 
Having these score parameters, then we can generate the missing grades for courses that a student did not take.  We also used GRM to obtain a model to generate a larger dataset, i.e. BUCSSynth. 

\subsection{Datasets}
This subsection describes each dataset and their attributes. 

\mpara{BUCS data:}
The original BUCS dataset consists of grades of students in CS courses at Boston University. This data was collected from Fall 2003 to Fall 2013. Each row of data looked like: FALL 2003, CS101,  U12345, U1, C+ which shows the semester year, course number, students' BU id, undergraduate/graduate year and the grade.  It consists of 9833 students. We only considered students who were taking CS330 and CS210 (required courses to obtain a major in CS) which consisted of 398 students and 41 courses. Here the courses correspond to topics. Obviously the new dataset had some missing values, not all 41 courses were taken by those 398 students.  To fill the grades for missing (student, course) pairs, we used GRM. First using GRM, we obtained the ability and difficulty parameters for all students and all courses. The abilities \footnote{\url{ http://cs-people.bu.edu/bahargam/abilities}} and difficulties' parameters are available online\footnote{\url{http://cs-people.bu.edu/bahargam/difficulties}}.  Then for each pair of (student, course) in which student $s$ did not take course $c$, we used the ability of $s$ and difficulty of $c$ to predict the grade of course $c$ for that student. After having all grades for all courses, we had to transform these grades to the number of required repetitions to learn a course. We assumed the number of required repetition to master a course (or topic) for the smartest student is 5 (base parameter). Note that throughout a semester students review the course materials to solve homework, do project and prepare for quizzes, midterm and final exams, so they review material for at least 5 times.   Thus for students who got A, we considered 5 repetitions needed to fully master the course and as the ability (and grade) drops, number of repetition goes up (step parameter). We also tried different base and step values for our experiments. 
 
\mpara{BUCSSynth data:}
In order to see how well our algorithm scales to a larger dataset, we generated a synthetic data,  based on the obtained parameters from GRM. We call this dataset BUCSSynth. From BUCS dataset, we observed that  the ability of students follows a normal distribution with 
$ \mu= 1.13$ and $\sigma = 1.41$.
Applying GRM to BUCS data, we obtained difficulty parameters for 41 courses. 
In order to obtain difficulties for 100 courses, we used the following approach:\\
\-\ \-\ \textbf{1.} Choose one of the 41 courses at random.\\
\-\ \-\ \textbf{2.} Use density estimation, smoothing and then get the CDF of the difficulties. \\
\-\ \-\ \textbf{3.} Randomly sample from the CDF to get the difficulties for a new course. \\
Using the aforementioned parameters, we generated the grades for 2000 students and 100 courses and we transformed the grades to number of repetitions similar to what we did for BUCS dataset. This dataset \footnote{\url{http://cs-people.bu.edu/bahargam/BUCSSynth}}  and the code \footnote{\url{http://cs-people.bu.edu/bahargam/BUCSSynthCode}} to generate it are  available online.

\mpara{Synthetic data:}
Our first synthetic dataset is  to generate ground truth data to compare our algorithm to ~\randomPartitioning and ~\kmeans. In this dataset we had generated 10 groups of students, each group containing 40 students. For each group we selected 5 courses and assigned repetitions randomly to those 5 courses such that the sum of repetition will be equal to the deadline\footnote{The repetition for those selected courses are not equal for the students in the same group, but for all the students in the group the sum of selected courses is equal to the deadline.}. Then for the remaining 35 courses, we filled the required number of repetitions with random numbers taken from a normal distribution with $\mu = \frac{deadline}{5}$ and $\sigma = 3$. We refer to this dataset as GroundTruth. We expect our algorithm to be able to find the right clusters of students while \kmeans\ cannot find this hidden structure.  

We have also generated the repetitions for 400 students and 40 courses using Pareto, Normal and Uniform distributions. We refer to this datastes as ~\pareto, ~\normal\  and ~\random.   To generate number of repetitions for different courses using the pareto distribution, we used the shape parameter $\alpha = 2$. For normal distribution we used $\mu = 30$ and $\sigma = 5$ and for uniform dataset we generated random numbers in the range of [5,100].

\begin{figure*} 
    \centering
    \makebox[\linewidth][c]{%
    \begin{subfigure}[b]{0.25\textwidth}
        \includegraphics[width=\textwidth]{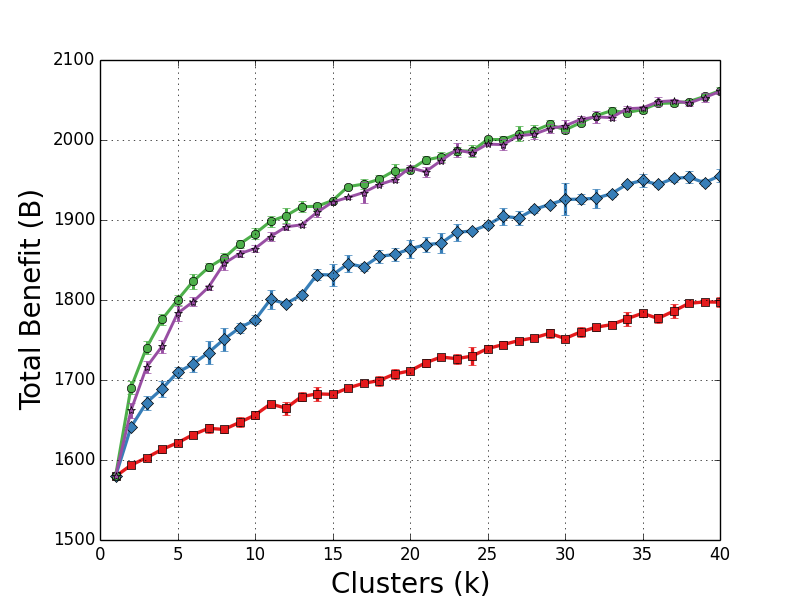}
        \caption{Ground Truth}
        \label{fig:groundtruth}
    \end{subfigure}%
    \begin{subfigure}[b]{0.25\textwidth}
        \includegraphics[width=\textwidth]{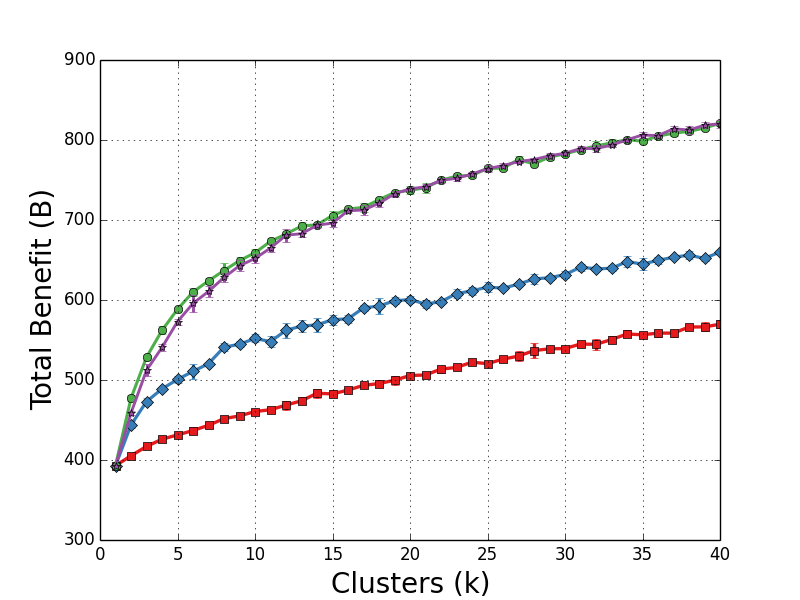}
        \caption{Random}
        \label{fig:random}
    \end{subfigure}%
    ~ 
    \begin{subfigure}[b]{0.25\textwidth}
        \includegraphics[width=\textwidth]{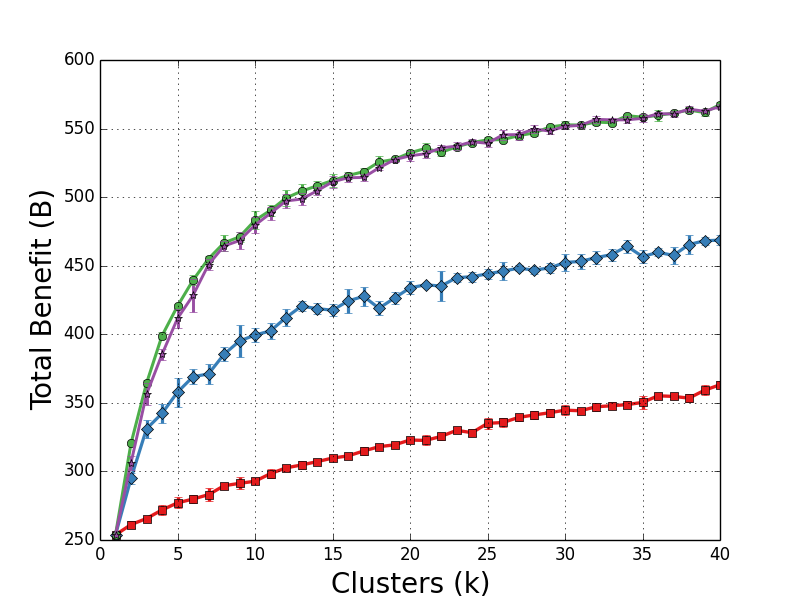}
        \caption{pareto}
        \label{fig:pareto}
    \end{subfigure}
    \begin{subfigure}[b]{0.25\textwidth}
        \includegraphics[width=\textwidth]{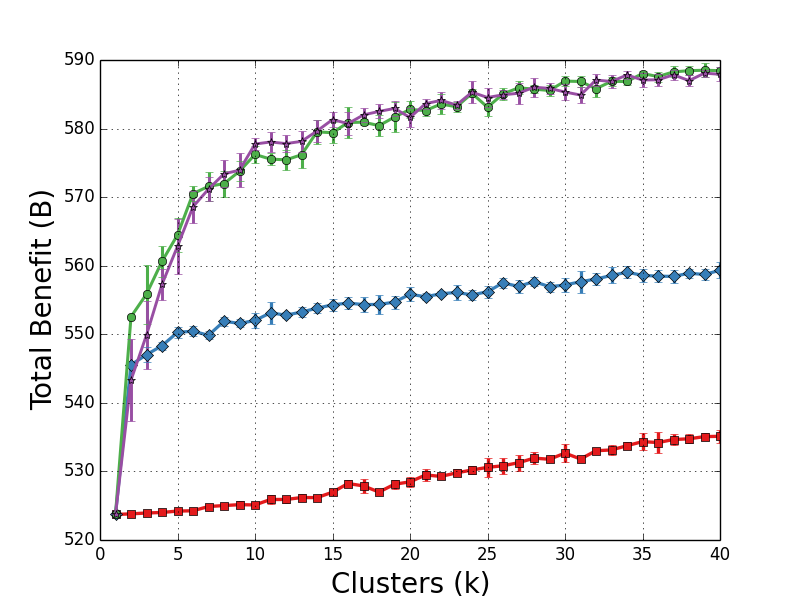}
        \caption{normal}
        \label{fig:normal}
    \end{subfigure}
    }
    \makebox[\linewidth][c]{%
    \begin{subfigure}[b]{0.25\textwidth}
        \includegraphics[width=\textwidth]{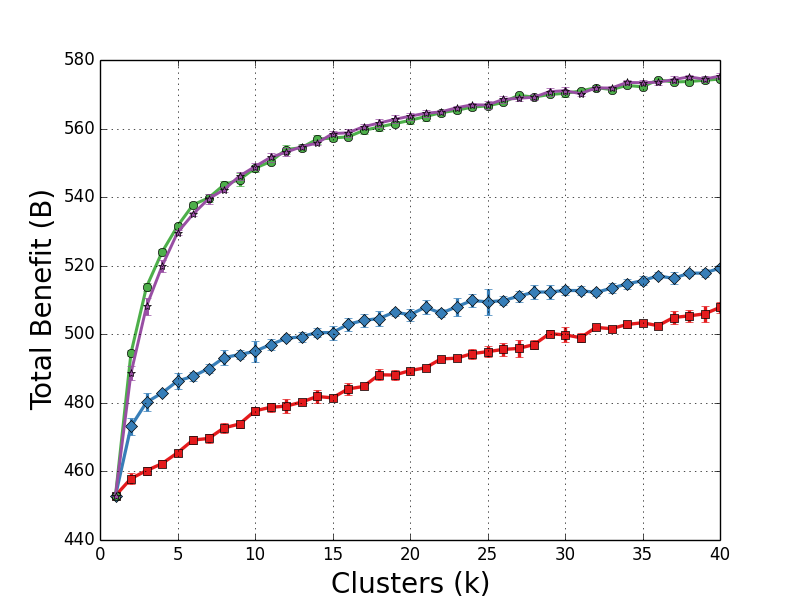}
        \caption{BUCS}
        \label{fig:BUCS}
    \end{subfigure}
    \begin{subfigure}[b]{0.25\textwidth}
        \includegraphics[width=\textwidth]{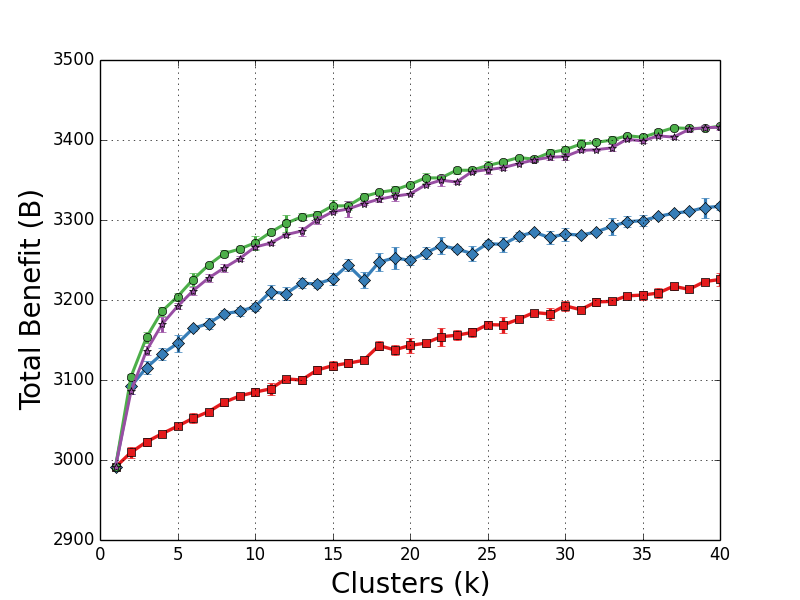}
        \caption{BUCSdeadline}
        \label{fig:BUCSdeadline}
    \end{subfigure}
    \begin{subfigure}[b]{0.25\textwidth}
        \includegraphics[width=\textwidth]{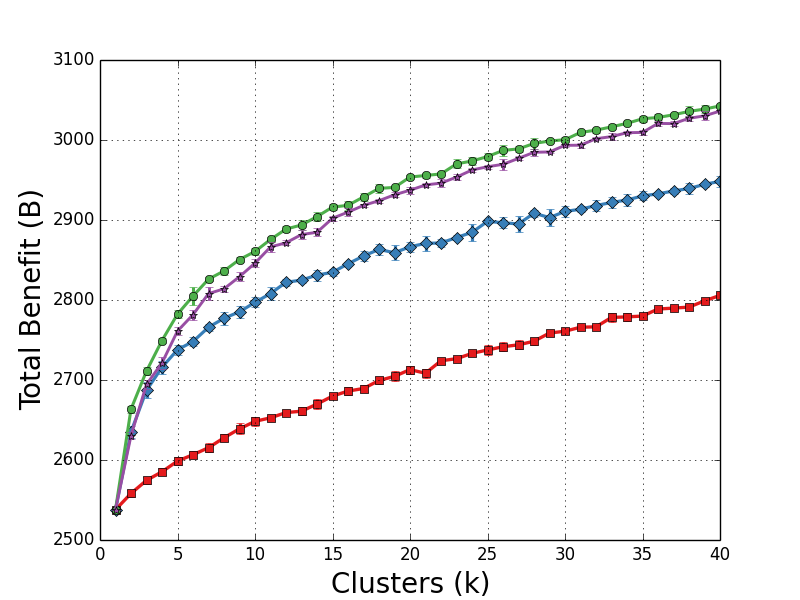}
        \caption{BUCSBase}
        \label{fig:BUCSBase}
    \end{subfigure}
    \begin{subfigure}[b]{0.25\textwidth}
        \includegraphics[width=\textwidth]{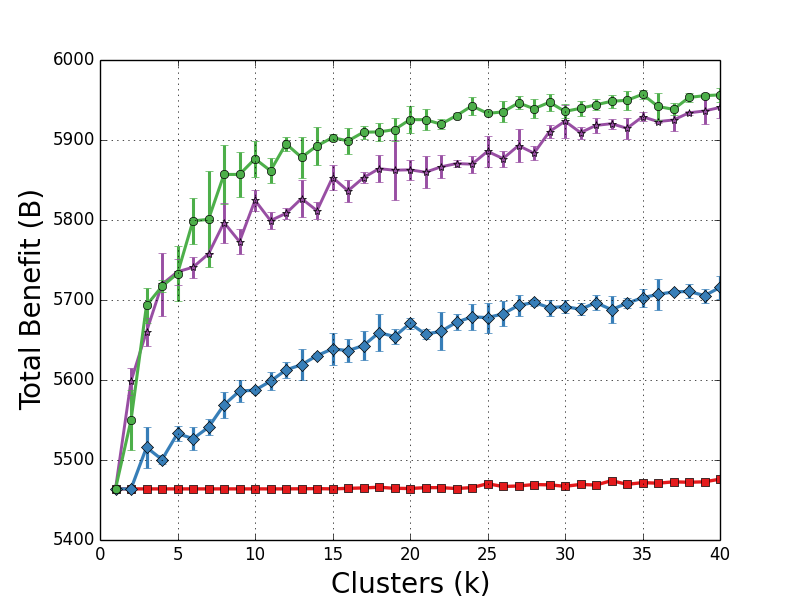}
        \caption{BUCSSynth}
        \label{fig:BUCSSynth}
    \end{subfigure}  
    }
    \caption*{\protect
\includegraphics*[width=0.5\textwidth]{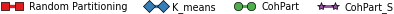}    
     }
    \caption{ Performance of  {\randomPartitioning}, {\kmeans}, {\kbenefit} and {\kbenefitSample} for {\maxbenefit} problem  on different datasets  }
\end{figure*}

\subsection{Results:}
Our experiments compare our algorithm in terms of our objective function (students' benefit) with \randomPartitioning\ and \kmeans\. Recall that the students' benefit is defines in Equation~\eqref{eq:partitionP}. 
The current algorithm is implemented in Python 2.7 and all the experiments are
run single threaded on a Macbook Air (OS-X 10.9.4, 4GB RAM). 
We compare our algorithm with \randomPartitioning and the  \kmeans algorithm, the built in k-means function in Scipy library.   Each experiment was repeated 5 times and the average results are reported in this section.  For sample size in  \kbenefitSample\ algorithm, we set parameter $c$ (explained earlier) to 4 in all experiments. 

\subsubsection{Results on Real World Datasets}
\mpara{BUCS:} 
We executed our algorithm on   \BUCS\ dataset untill reaching convergence and show how well it maximized the  benefit of learning while varying the number of clusters We  compare {\kbenefit} and {\kbenefitSample} to  \randomPartitioning\ and {~\kmeans}. The result is depicted in Figure ~\ref{fig:BUCS} where each point shows the benefit of all students when partitioning them into k groups. As we  see the \randomPartitioning\ has the lowest benefit and our algorithm has the best benefit. As the number of clusters increases (having hence fewer students in each cluster), the benefit also increases, means the schedule for those students is more personalized and closer to their individual schedule, when having one tutor for each student. The benefit grows dramatically from 1 cluster to 10 cluster. %
But after 10 cluster the increase in the potential is slower. 
  We also show the 95 confidence interval, but it was small that cannot be seen in some plots.
  
  \mpara{BUCSdeadline:}  We also show the result for different values of deadline. As the deadline increases, the gap between  \kmeans\ and our algorithm decreases. The reason is as deadline is greater we have to take into consideration more topics to teach to the students.
   Note that \kmeans\ algorithm behaves like our algorithm except it 
  considers all the courses and ignores the deadline. So the greater the deadline is, the closer  \kmeans\ gets to our algorithm. But in real life, we do not have enough time to repeat (or teach) all of the courses (for e.g. for preparation before SAT exam).  Figure ~\ref{fig:BUCSdeadline}  illustrates the case when deadline is equal to the average sum of need vectors for different students. 
   
   \mpara{BUCSBase:}
We tried different values for base and step parameters (explained earlier) and the result is depicted in Figure 
   ~\ref{fig:BUCSBase}, when the base and step are equal to 1. We observe that when the base is equal to 1 and step is also small,  \kmeans\ also performs well, but still our algorithm is doing  better than \kmeans\ . The larger is the value of base and step parameter, the better our algorithm performs.
    
\subsubsection{Results on Semi-synthetic Dataset}
\mpara{BUCSSynth dataset:} 
We  ran our algorithmon on BUCSSynth dataset  to see how well our algorithm scales for large number of students. The result is depicted in Figure ~\ref{fig:BUCSSynth}. 

\subsubsection{Results on Synthetic Datasets}
Our first set of experiments on synthetic data used the \groundTruth\ dataset.  The result is illustrated in Figure ~\ref{fig:groundtruth}. As we see \kbenefit\ and \kbenefitSample\ both are performing really well.  For all of the courses the mean required repetition is close to 10 with standard deviation 3.  We expect that students in the same group (when generating the data) should be placed in the same cluster as well after running our algorithm and the schedule should include the selected courses in each group. 
In each group students have different repetition values for the selected courses, but the sum of these selected courses is equal to the deadline and our algorithm realized this structure and only considered these selected courses to obtain the schedule. But \kmeans\ lacked this ability and did not cluster these students together. 
The next studied datasets were  {\random}, \pareto\ and \normal\ datasets  and the results are depicted in Figure ~\ref{fig:random}, ~\ref{fig:pareto} and ~\ref{fig:normal} respectively.  For these datasets also our algorithm outperformed \kmeans\ and \randomPartitioning\ .


\section{Conclusion}\label{sec:conclusion}
In this paper, we highlighted the importance of team formation and scheduling educational materials for students. We suggested a novel clustering algorithm to form different teams and teach the team members based on their abilities in different topics. Our algorithm maximized the potential and benefit of team members for learning . The encouraging results that we obtained shows that our proposed solution is effective and suggest that we have  to consider personalized teaching for students and form more efficient teams.


\bibliographystyle{abbrv}
\bibliography{edu_main}  


\end{document}